\documentclass[10pt,a4paper]{article}

\usepackage[T1]{fontenc}
\usepackage[utf8]{inputenc}
\usepackage{lmodern}
\usepackage{microtype}
\usepackage[margin=1in]{geometry}
\usepackage{amsmath,amssymb,amsthm,mathtools}
\usepackage{booktabs}
\usepackage{array}
\usepackage{tabularx}
\usepackage{multirow}
\usepackage{enumitem}
\usepackage{xcolor}
\PassOptionsToPackage{hyphens}{url}
\usepackage[colorlinks=true,linkcolor=blue!50!black,urlcolor=blue!60!black,citecolor=blue!60!black]{hyperref}
\usepackage{caption}
\usepackage{url}
\usepackage{graphicx}
\usepackage{adjustbox}
\usepackage{float}
\usepackage{titlesec}

\setlist[itemize]{leftmargin=1.4em,topsep=2pt,parsep=2pt,itemsep=2pt}
\setlist[enumerate]{leftmargin=1.6em,topsep=2pt,parsep=2pt,itemsep=2pt}
\titlespacing*{\section}{0pt}{1.5\baselineskip}{0.5\baselineskip}
\titlespacing*{\subsection}{0pt}{1.0\baselineskip}{0.3\baselineskip}
\titlespacing*{\paragraph}{0pt}{0.6\baselineskip}{0.8em}

\theoremstyle{plain}
\newtheorem{theorem}{Theorem}
\newtheorem{proposition}[theorem]{Proposition}

\theoremstyle{definition}

\newcommand{\PCT}{\textsc{PCT}}
\newcommand{\cell}[1]{\texttt{#1}}
\renewcommand{\Re}{\operatorname{Re}}
\renewcommand{\Im}{\operatorname{Im}}
\let\origpm\pm
\renewcommand{\pm}{\ensuremath{\,\origpm\,}}

\title{\textbf{Complex-valued Phase-Coherent Transformers:\\[2pt] \large Experiments for the Conditions of a General-Purpose Complex Transformer}}
\author{Leona Hioki\\ \texttt{leohio@pm.me}}
\date{}

\begin{document}
\maketitle

\begin{abstract}
\noindent
Complex-valued Transformers have inherited softmax attention over the raw complex inner
product. Outside natively complex domains this standard form stays near chance, and no
complex attention had been shown to correct it. We show that the match must be a scaled
cosine score: L2-normalise queries and keys, so the score reads their cosine similarity and
ignores their magnitudes, and hold that score at order-one scale. With this the same models
train on four diagnostic tasks under two different gates; without the normalisation they
stay at chance on ListOps and Needle under both gates and fall far below on the other two,
and a normalised score placed at too small a scale fails as well. The resulting family of
phase-coherent Transformers (\PCT) matches or exceeds the strongest real-valued baseline
across long-range memory, positional retrieval, hierarchical reasoning, frequency-domain
classification and physical complex signals; it
shows no degradation up to depth 20; and its loss decreases log-linearly over a 61-fold
range of parameters. A member of the family,
complex screening combined with a phase-coherent recurrence, is the first genuinely
complex-valued neural network to solve Path-X, with 91.6\% of its trainable parameters
complex-valued against 38.2\% for S4. We record these as signs of generalisation not
previously seen in complex-valued neural networks.
\end{abstract}

\section{Introduction}

Complex-valued neural networks (CvNNs) were developed for domains whose signals are
natively complex: wireless communication, radar, MRI and audio. Complex extensions of
convolutions, normalisation layers and activations have been studied systematically; the
Transformer side is thinner. Existing complex attention mechanisms, from the component-wise
attention of Yang et al.\ (2020) to the complex scaled dot-product attention of Eilers and
Jiang (2023) and the phase-interference attention of the Holographic Transformer (Hao et
al., 2025), keep the row-normalised weighting of real-valued attention, and their score
paths differ from one another in normalisation and scale. Which property of the score path
decides whether a complex Transformer trains has not been isolated.

The gap shows in performance. The standard form of a complex Transformer, softmax over the
raw complex inner product, stays near chance on discrete tasks outside natively complex
domains, 0.10 on Copy Memory (delay 2{,}000), 0.10 on ListOps (length 1{,}024) and 0.00 on
Needle-in-a-Haystack (length 2{,}048), and falls far behind the strongest real-valued
attention at the same parameter budget, which solves the first and the third and reaches
0.70 on the second (Section~\ref{sec:breadth}). Replacing the softmax by an element-wise
sigmoid does not help (0.15, 0.13 and exactly chance, $1/64$, in the isolation setting of
Section~\ref{sec:attention}), so the problem is not the gate but the raw complex score.

This paper shows that the problem disappears when the match is a scaled cosine score.
Normalise queries and keys to unit norm and score them by
$s_{ij}=\Re\langle\bar q_i,\bar k_j\rangle\sqrt{d_h}$: dividing each complex vector by its
modulus makes the score their cosine similarity, which reads the direction of the two
vectors (their phase pattern and the relative amplitudes of the components) and ignores how
large either vector is, and the $\sqrt{d_h}$ factor places it at order-one scale. We call a layer that applies a
real-valued gate to this scaled cosine score a \emph{phase-coherent attention layer}, and the
family defined by complex-linear projections, the scaled cosine score and a real gate that is a
function of the score matrix the \emph{\PCT{} family}. Its canonical member uses an
element-wise sigmoid gate (\PCT); an L2-softmax and a screening gate belong to the same
family. In an isolation experiment that toggles only the normalisation, both the sigmoid and
the row-softmax gate reach 1.000 on Copy, 1.000 on Needle, 0.50--0.53 on ListOps and
0.50--0.51 on FFT-MNIST with the normalisation; without it both stay at chance on ListOps
and Needle, the sigmoid gate stays at chance on Copy, and FFT-MNIST falls to 0.38
(Section~\ref{sec:attention}). The mechanism is measurable at initialisation:
unnormalised scores are constant within each row to within $10^{-4}$, so attention starts
uniform and the selection gradient is vanishingly small; normalisation restores an $O(1)$ spread, and in
the complex case that spread is the direction of the vectors, their phase pattern and
relative amplitudes. The only further condition is that the
gate passes gradient at initialisation; conditions on the \emph{shape} of the gate are
deferred to Section~\ref{app:gates}.

The family works broadly outside complex domains. Across long-range memory (Copy),
positional retrieval (Needle), hierarchical reasoning (ListOps), frequency-domain
classification (FFT-MNIST), byte-level text and pixel-level image classification, and
modulation classification on raw I/Q signals (RadioML), it matches or exceeds real
screening, the strongest real-valued baseline at matched parameter budget
(Section~\ref{sec:breadth}). Accuracy on ListOps does not degrade from depth 2 to depth 20,
and growing width and depth together at a fixed ratio from 0.10M to 6.3M parameters lowers
evaluation loss log-linearly ($-0.081$ per decade, $R^2=0.999$). The depth degradation
feared for complex substrates is not observed.

Finally, combining the family's complex screening gate with a phase-coherent linear
recurrence solves Path-X (length 16{,}384) at $92.71\pm0.89$ over three seeds, above S4-v1
and DSS and between S4D-LegS and S4D-Inv (Section~\ref{sec:pathx}). This is the
first solution of Path-X by a genuinely complex-valued neural network, in a measurable
sense: 91.6\% of its trainable parameters are complex-valued and complex arithmetic carries
both the transport path (the recurrence) and the matching path (the screening attention),
whereas S4 at its published Path-X configuration has 38.2\%, all inside the state-space
kernel; in both models the inter-block residual stream is real. Fixing the recurrence
eigenvalues to be real (phase zero) never leaves chance, so phase is necessary.

\textbf{Contributions.}
\begin{enumerate}
  \item Query--key normalisation is a precondition for training complex attention across tasks: a
  one-variable isolation over four tasks and two gates, with the mechanism (row-constant
  scores at initialisation) measured directly.
  \item The \PCT{} family, defined by the scaled cosine score, within which the element-wise
  sigmoid, L2-softmax and screening gates are task-dependent siblings of equal standing.
  \item Depth robustness to depth 20 and log-linear scaling over a 61-fold parameter range.
  \item A Path-X solution by complex screening with a phase-coherent recurrence, with the
  measured complex-valued parameter share (91.6\% against 38.2\% for S4) and a
  phase-necessity ablation.
\end{enumerate}

The main text is limited to these results. The reduction of gate-shape conditions to
initialisation, the mechanism measurements, the machine-checked equivariance and
depth-uniform stability of the family (Lean~4), and the changes from the previous version
of this manuscript are in the appendices. The results are obtained at one configuration per
task on small-scale diagnostic tasks in a solve-versus-chance regime; we do not rank the
surviving variants on real datasets.

\section{Phase-coherent attention}
\label{sec:attention}

\subsection{Definition}
\label{sec:definition}

A phase-coherent attention layer maps token states $x_i\in\mathbb{C}^{d}$ through
complex-linear projections $W_q,W_k,W_v,W_o$:
\begin{align*}
q_i &= W_q x_i, \qquad k_j = W_k x_j, \qquad v_j = W_v x_j,\\
\bar q_i &= q_i/\lVert q_i\rVert, \qquad \bar k_j = k_j/\lVert k_j\rVert,\\
s_{ij} &= \Re\langle \bar q_i,\bar k_j\rangle\sqrt{d_h}\in[-\sqrt{d_h},\sqrt{d_h}],\\
o_i &= W_o \sum_{j} \alpha_{ij}\,v_j, \qquad \alpha = g(S)\in\mathbb{R}^{N\times N},
\end{align*}
where $d_h$ is the head dimension and $S=(s_{ij})$ the score matrix. The normalisation makes
$s_{ij}$ a cosine similarity: it removes the magnitude of each vector and leaves the
direction, that is the per-component phase differences weighted by the relative amplitudes;
by Cauchy--Schwarz it lies in $[-\sqrt{d_h},\sqrt{d_h}]$, and the $\sqrt{d_h}$ factor places
it at order-one scale. Three properties define the \PCT{} family: (P1) complex-linear
projections, (P2) the scaled cosine score, and (P3) a real-valued gate $g$ that is a
function of the score matrix. Its members differ only in $g$:
\begin{itemize}
  \item \textbf{\PCT{} (canonical)}: the element-wise sigmoid $\alpha_{ij}=\sigma(s_{ij}+b)$
  with $b$ initialised to $-\log N$ (Ramapuram et al., 2025), applied without row
  normalisation.
  \item \textbf{L2-softmax}: $\alpha_{ij}=\exp(s_{ij})/\sum_k\exp(s_{ik})$, a row softmax on the normalised score.
  \item \textbf{Screening}: $\alpha_{ij}=r^2\,\mathrm{relu}(s_{ij}-t)^2$ with a learnable
  width $r$ and threshold $t$ per head, followed by TanhNorm and a modReLU gate on the
  aggregate (the complex lift of Nakanishi, 2026).
\end{itemize}
Since real weights multiply complex values, every value contributes to $o_i$ with its
phase unchanged. Since the score matrix is invariant under a global rotation
$x_i\mapsto e^{i\varphi}x_i$ of the input, every member of the family commutes with global
phase rotation; the proof is a cancellation and is machine-checked
(Appendix~\ref{app:lean}). The real-valued counterparts replace the projections by real
matrices and the score by the cosine of real vectors; the standard real baselines are
softmax (Vaswani et al., 2017), sigmoid (Ramapuram et al., 2025) and screening (Nakanishi,
2026). The standard complex baseline, written \cell{complex\_softmax} throughout, is the
row softmax over $\Re\langle q_i,k_j\rangle/\sqrt{2d_h}$ of the \emph{unnormalised} complex
queries and keys, as ported by lucidrains (2024) from Eilers and Jiang (2023); it differs
from the L2-softmax member of the family in exactly one respect, the normalisation.

\subsection{A scaled cosine score is necessary for training across tasks}
\label{sec:normalisation}

We ablate the L2 normalisation of queries and keys, holding everything else fixed. To
separate the effect of normalisation from the choice of gate, we run the ablation under two
gates: the element-wise sigmoid, in which keys do not compete, and the row softmax, in which
keys compete for a unit budget. Four diagnostic tasks with known chance levels are used:
\emph{Copy Memory} (reproduce ten tokens after a delay of 1{,}000; chance 0.10),
\emph{Needle-in-a-Haystack} (retrieve one token from a length-2{,}048 sequence of
same-vocabulary distractors; chance $1/64\approx0.016$), \emph{ListOps} (hierarchical list
evaluation at length 1{,}024; chance 0.10) and \emph{FFT-MNIST} (classify the two-dimensional
Fourier transform of MNIST digits; chance 0.10). Copy, ListOps and FFT use models of 0.8M
parameters (dim 128, depth 4) trained for 2K steps; Needle uses models of 4.7M complex
parameters (dim 256, depth 6) trained for 15--30K steps. All numbers are means over three seeds with a
deterministic final evaluation over 2{,}048 samples (1{,}024 for Needle); $\pm$ denotes the
standard deviation across seeds.\footnote{FFT-MNIST accuracy is measured on samples drawn
from the training tensors, a limitation of the data pipeline shared by all variants; its
values are comparable across columns but are not held-out claims.}

\begin{table}[H]
\centering\small
\begin{tabular}{lcc}
\toprule
task (chance) & sigmoid gate & row-softmax gate \\
\midrule
Copy Memory (0.10)   & \textbf{1.000\pm0.000} / 0.151\pm0.017 & \textbf{1.000\pm0.000} / 0.820\pm0.183 \\
ListOps (0.10)       & \textbf{0.498\pm0.016} / 0.134\pm0.006 & \textbf{0.526\pm0.007} / 0.134\pm0.006 \\
Needle (0.016)       & \textbf{1.000\pm0.000} / 0.016\pm0.001 & \textbf{1.000\pm0.000} / 0.000$^{\dagger}$ \\
FFT-MNIST (0.10)     & \textbf{0.498\pm0.003} / 0.381\pm0.018 & \textbf{0.507\pm0.026} / 0.384\pm0.013 \\
\bottomrule
\end{tabular}
\caption{Accuracy with / without query--key normalisation, under two gates. $^{\dagger}$Single
30K-step run at this configuration; fifteen further runs of the same unnormalised cell at other
lengths and budgets all score $\le0.25$.}
\label{tab:norm}
\end{table}

Without normalisation both gates are at chance on ListOps and Needle, the sigmoid gate is
at chance on Copy and the softmax gate is unstable there ($0.61$--$0.95$ across seeds), and
both fall to $0.38$ on FFT-MNIST; with it, both solve Copy and Needle deterministically and
reach the same ListOps accuracy. The pattern does
not depend on whether keys compete, so an explanation of these results based on the
softmax's competition for attention mass, including the one given in the previous version
of this manuscript, is ruled out by the sigmoid column.

\paragraph{Why the unnormalised models do not train: an optimisation effect, not an expressive one.} The
unnormalised model class contains the normalised solutions (the key projection could learn
norm-equalised keys), so the difference must arise in training, and it arises at the
earliest possible point. At initialisation, unnormalised scores are constant within each
attention row to within $10^{-4}$ (row standard deviation $6.5\times10^{-5}$ on embedded
ListOps inputs), so softmax attention starts exactly uniform (row entropy $=\ln N$ to
machine precision) and the sigmoid gate starts at a constant weight. A row that is constant to $10^{-4}$ yields a selection gradient of the same order, and the
models do not leave this state: after 30K steps their attention
is still uniform in every layer and their key norms have equalised (coefficient of variation
0.003) without any selective structure. Two controls confirm the account. Rescaling the
unnormalised score by a temperature $\tau\in\{0.25,1,4\}$ leaves training trajectories
numerically identical, since a constant row is constant at every temperature. Normalisation,
in turn, restores an $O(1)$ score spread at the same initialisation (row standard deviation
0.61), and in the complex case that spread is the direction of the vectors (their phase
pattern and relative amplitudes), because dividing a complex vector by its modulus preserves
the argument of every component
(Section~\ref{app:mechanism}).

\paragraph{Relation to prior work.} Query--key normalisation is an established stabiliser in
real-valued Transformers: it bounds attention logits in low-resource translation (Henry et
al., 2020), enables scaled cosine attention in Swin~V2 (Liu et al., 2022), prevents logit
divergence in ViT-22B (Dehghani et al., 2023), and nGPT (Loshchilov et al., 2024)
normalises all representations. In the complex domain, the phase reading of
$\Re\langle q,k\rangle$ is discussed by Eilers and Jiang (2023), whose architecture retains
magnitudes, and a norm-divided complex score is one component of the Holographic
Transformer (Hao et al., 2025), which adds a phase-interference decay and a phase rotation
of the values on top of it. What Table~\ref{tab:norm} adds is the causal statement: in
complex attention, a cosine score at order-one scale is not a stabiliser but a precondition
for training across tasks, chance versus solved across four tasks and two gates; the normalisation
supplies the cosine and the $\sqrt{d_h}$ factor supplies the scale, and both are needed. A
Holographic-style variant that adds the phase machinery on top of the normalised score, at
our score scale, trains to parity with the L2-softmax in our harness
(Section~\ref{app:variants}).

\paragraph{One axis, two ends.} Any query--key score factorises as
$s_{ij}=\lVert q_i\rVert\,\lVert k_j\rVert\cos\angle(q_i,k_j)$: the match is decided by the
angle, and $\lVert q\rVert\lVert k\rVert$ sets its scale. The failure above is the small-scale
end of this axis: at standard initialisation the product is of order $10^{-4}$, the rows are
flat, and attention starts uniform. The large-scale end is the one reported for real-valued
models at scale, where growing norms drive the logits up until attention entropy collapses
and training diverges (Dehghani et al., 2023; Zhai et al., 2023). Training requires the score
to stay in an $O(1)$ window between the two, and query--key normalisation is the operation
that pins it there: dividing by $\lVert q\rVert\lVert k\rVert$ is a per-pair temperature that
follows both the initialisation scale and the drift of the norms during training, and leaves
the angle as the only carrier of the match. This is also why a fixed temperature cannot
substitute for it: the value required at initialisation is of order $10^{4}$ and changes as
the norms move. In the complex case the angle is the phase structure of the two vectors,
which is what the rest of this paper is about.

\subsection{The gate is nearly free}
\label{sec:gate}

Given the scaled cosine score, how much does the gate matter? Table~\ref{tab:family}
compares the three members of the family at one configuration per task.

\begin{table}[H]
\centering\small
\begin{tabular}{lcccc}
\toprule
gate & Copy & ListOps & FFT-MNIST & Needle \\
\midrule
sigmoid (\PCT)  & 1.000\pm0.000 & 0.498\pm0.016 & 0.498\pm0.003 & 1.000\pm0.000 \\
L2-softmax      & 1.000\pm0.000 & \textbf{0.526\pm0.007} & 0.507\pm0.026 & 1.000\pm0.000 \\
screening       & 1.000\pm0.000 & 0.506\pm0.011 & \textbf{0.693\pm0.060} & 1.000$^{\ddagger}$ \\
\bottomrule
\end{tabular}
\caption{The three gates of the family on the scaled cosine score (same configurations as
Table~\ref{tab:norm}). $^{\ddagger}$Single seed at this configuration.}
\label{tab:family}
\end{table}

Three observations. First, the L2-softmax matches or exceeds the default everywhere, and on
Needle it solves faster (accuracy 1.0 from step $\approx$2{,}000 against
3{,}000--6{,}000). This is explained by a factorisation: L2-softmax weights decompose as an
element-wise exponential gate times a per-query \emph{positive real} scalar, so the mixture
of values, and hence the phase structure of the output, is that of a non-competing gate, and
competition only modulates a phase-inert gain. Second, screening leads on FFT-MNIST by
$+0.19$ and the L2-softmax on ListOps; at depth 20 the L2-softmax reaches
$0.684\pm0.002$ on ListOps, above the default at every depth we tested
(Section~\ref{sec:depth}). Differences among the members are task-dependent rather than
ordered, and we make no claim that the sigmoid gate is optimal. Third, the one property the
gate must have is an operating point that passes gradient at initialisation. With the
standard bias $b=-\log N$, gates that are flat on the negative axis (ReLU and its clamped
form) have nearly their whole score range inside the dead region: on Copy the range is
entirely negative, they receive exactly zero gradient and they stay at chance, while on the
shorter FFT-MNIST sequence about one per cent of the range survives and they reach $0.31$
against the sigmoid's $0.50$. Moving the bias to $0$, with nothing else changed, brings them
to $1.000$ on Copy and to $0.62$--$0.67$ on FFT-MNIST, and the same holds for squared and
cubed gates (Section~\ref{app:init}). No shape property of the gate, whether smoothness, boundedness or
growth order, survives as a necessary condition once the operating point is alive. What the
sigmoid does provide is robustness: its gradient is nowhere zero, so it is the only gate in
our record that needs no bias tuning, and it holds full accuracy across the learning-rate
and batch windows we tested (Appendix~\ref{app:robustness}).

The score path also separates the complex family from its real counterpart in one
controlled setting. Applying the same normalisation to real-valued sigmoid and softmax cells
(cosine scores of real vectors) solves Copy at $1.000$ but leaves Needle at $0.26$--$0.32$
over three seeds each, while both complex members reach $1.000$
(Section~\ref{app:realnorm}, Table~\ref{tab:realnorm}). Since real screening does solve
Needle, this is a difference between score paths at fixed gate type, and we draw no
field-level claim from it; together with the phase-necessity ablation of
Section~\ref{sec:pathx}, it is the evidence that the word \emph{phase} in this paper is
load-bearing rather than decorative.

\subsection{The gate must pass gradient at initialisation}
\label{app:init}\label{app:gates}

The experiments of this and the following three subsections use the configurations of
Table~\ref{tab:norm} (Copy $d{=}1{,}000$, ListOps and FFT-MNIST at dim 128, depth 4, 2K
steps; Needle at dim 256, depth 6), three seeds, and a deterministic evaluation over
2{,}048 samples (1{,}024 for Needle). FFT-MNIST is measured on training-tensor samples
throughout.

With normalised scores the gate input $s_{ij}+b$ is confined to
$[-\sqrt{d_h}+b,\sqrt{d_h}+b]$, and the standard bias $b=-\log N$ places that interval
according to the sequence length. On Copy it lies entirely below zero
($\sqrt{32}-\log 1024=-1.27$), so gates that are flat on the negative axis are identically
zero with identically zero gradient, in every parameter (Appendix~\ref{app:lean}); this is
the configuration their chance-level readings come from. On FFT-MNIST, whose 256 tokens give
a smaller bias, the upper end is $\sqrt{32}-\log 256=+0.11$, about one per cent of the
score range: the dead region no longer covers everything, and the two ReLU gates read
$0.314$ rather than chance, still $0.18$ below the sigmoid. Table~\ref{tab:init} repeats
training with the bias moved to $0$ and nothing else changed.

\begin{table}[H]
\centering\small
\begin{tabular}{lcc}
\toprule
gate & Copy: $b{=}-\log N \to b{=}0$ & FFT-MNIST: $b{=}-\log N \to b{=}0$ \\
\midrule
ReLU              & 0.102\pm0.001 $\to$ \textbf{1.000\pm0.000} & 0.314\pm0.014 $\to$ \textbf{0.674\pm0.033} \\
clamped ReLU      & 0.102\pm0.001 $\to$ \textbf{1.000\pm0.000} & 0.314\pm0.014 $\to$ \textbf{0.618\pm0.095} \\
squared ($s^2$)   & 0.998\pm0.003 $\to$ 1.000\pm0.000          & 0.219\pm0.010 $\to$ \textbf{0.589\pm0.074} \\
cubed ($s^3$)     & 1.000\pm0.000 $\to$ 1.000\pm0.000          & 0.228\pm0.015 $\to$ \textbf{0.485\pm0.094} \\
sigmoid (control) & 1.000\pm0.000 $\to$ 1.000\pm0.000          & 0.498\pm0.003 $\to$ 0.516\pm0.071 \\
\bottomrule
\end{tabular}
\caption{Accuracy before and after moving the gate bias to $0$.}
\label{tab:init}
\end{table}

Every gap between the left-hand and right-hand columns closes under a one-line change of
initialisation. No shape property of the gate (smoothness, boundedness, growth order)
survives as a necessary condition once the operating point is alive; hypotheses one might
entertain about unbounded gates under superposition, including one stated in the previous
version of this manuscript, are explained by initialisation. The sigmoid is the only gate in
this table that never required the care: its gradient is nowhere zero, so it is insensitive
to the bias choice.

\subsection{Gate variants}
\label{app:variants}

Table~\ref{tab:variants} collects further variants, each changing one property of the
default gate while keeping the scaled cosine score.

\begin{table}[H]
\centering\footnotesize
\setlength{\tabcolsep}{4pt}
\begin{tabular}{llccc}
\toprule
variant & property changed & Copy & ListOps & FFT-MNIST \\
\midrule
sigmoid (\PCT)              & ---                          & 1.000\pm0.000 & 0.498\pm0.016 & 0.498\pm0.003 \\
L2-softmax                  & keys compete ($\sum_j\alpha_{ij}=1$) & 1.000\pm0.000 & 0.526\pm0.007 & 0.507\pm0.026 \\
screening                   & multiplicative screening     & 1.000\pm0.000 & 0.506\pm0.011 & 0.693\pm0.060 \\
complex-valued gate$^{a}$   & gate rotates each contribution & 1.000\pm0.000 & 0.298\pm0.043 & 0.471\pm0.039 \\
complex-normalised softmax$^{b}$ & normaliser rotates tokens & 0.909\pm0.157 & 0.429\pm0.043 & 0.416\pm0.028 \\
Holographic-style$^{c}$ & phase-interference decay, value rotation & 1.000\pm0.000 & 0.483\pm0.016 & 0.536\pm0.073 \\
\bottomrule
\end{tabular}
\caption{Gate variants on the scaled cosine score. $^{a}\;\alpha_{ij}=\sigma(s_{ij}+b)\,e^{is_{ij}}$.
$^{b}$Weights $e^{z_{ij}}/\sum_k e^{z_{ik}}$ with complex logits
$z_{ij}=\langle\bar q_i,\bar k_j\rangle\sqrt{d_h}$. $^{c}$Row softmax over
$s_{ij}\,e^{-\alpha|\Delta\phi_{ij}|}$ with $\Delta\phi_{ij}=\arg\langle q_i,k_j\rangle$
and learnable $\alpha$, values rotated by $e^{i\Delta\phi_{ij}}$, at the score scale of
Section~\ref{sec:definition}; on Needle all three seeds reach $1.000$ by step 2{,}500 (in-loop
evaluation; the runs ended before the final evaluation).}
\label{tab:variants}
\end{table}

Even gates that touch phase, by rotating contributions or tokens, solve Copy and remain
well above chance elsewhere, at a cost on ListOps and with one unstable seed on Copy for
the complex normaliser. The cost of writing to phase depends on what is written: the
score-driven rotation costs $-0.20$ on ListOps, whereas rotation by the geometric phase
mismatch $\arg\langle q,k\rangle$ costs $-0.04$, within noise. Keeping all learned
modulation real is therefore the care-free choice rather than a requirement. The
Holographic-style variant, which adds its phase machinery on top of the normalised score,
neither helps nor hurts relative to the L2-softmax on any of the four tasks (Copy and Needle
tie at $1.000$, FFT $+0.03$, ListOps $-0.04$); the normalisation suffices alone. At its
published score scale ($1/\sqrt{d_k}$) the same variant, in our harness, stays at chance on
ListOps ($0.134\pm0.006$; Copy $1.000$, FFT-MNIST $0.534\pm0.072$), with the decay
coefficient $\alpha$ frozen at its initial value in every seed ($0.96$--$0.99$ per layer),
and trains ($0.483\pm0.016$) once the score is brought to the scale of
Section~\ref{sec:definition}. This is the row-flat initialisation of
Section~\ref{sec:normalisation}, reproduced on a design that was not used to derive it,
and the frozen $\alpha$ is an independent signature of the same state, since no gradient
reaches it. We do not claim that the published model fails in its own setting, whose
encoder, data and training we did not reproduce.

\subsection{Normalised real baselines}
\label{app:realnorm}

Applying the same query--key normalisation to the real-valued sigmoid and softmax cells
(cosine scores of real vectors) gives Table~\ref{tab:realnorm}.

\begin{table}[H]
\centering\small
\begin{tabular}{lccc}
\toprule
task & real sigmoid, normalised & real softmax, normalised & complex counterparts \\
\midrule
Copy $d{=}1000$   & 1.000\pm0.000 & 1.000\pm0.000 & 1.000 / 1.000 \\
Needle $L{=}2048$ & 0.317\pm0.170 (max 0.50) & 0.259\pm0.133 (max 0.40) & \textbf{1.000} / \textbf{1.000} \\
\bottomrule
\end{tabular}
\caption{Real-valued cells with normalised queries and keys ($N{=}3$).}
\label{tab:realnorm}
\end{table}

Normalisation is field-agnostic on Copy: the normalised real cells solve it, so
unnormalised real baselines understate real attention on retrieval, which is why
Section~\ref{sec:crosstask} reads the real comparison against real screening. On Needle the
normalised real cells improve from $0.000$ to $0.26$--$0.32$ but do not solve the task,
while both complex members reach $1.000$; among sigmoid- and softmax-gated cells this is the
one controlled setting where the complex score path is decisively stronger than its real
counterpart. Since real screening does solve Needle, the gap is specific to gate type, and
we advance no field-level claim from it.

\subsection{Mechanism measurements}
\label{app:mechanism}

\textbf{Scores at initialisation.} On embedded ListOps inputs, the unnormalised complex
score $\Re\langle q_i,k_j\rangle/\sqrt{2d_h}$ has a within-row standard deviation of
$6.5\times10^{-5}$ at initialisation, so the row softmax has entropy $\ln N$ to machine
precision and the sigmoid gate a constant weight per row. The normalised score
$\Re\langle\bar q_i,\bar k_j\rangle\sqrt{d_h}$ has a within-row standard deviation of
$0.61$ at the same initialisation. Any real scalar $\lambda>0$ applied to $q$ or $k$ leaves
the normalised score unchanged, and for a fixed direction the unnormalised score can be made
arbitrarily large by magnitude (Appendix~\ref{app:lean}); the normalised score therefore
carries exactly the directional information, the phase pattern and relative amplitudes of
the two vectors.

\textbf{Temperature.} Rescaling the unnormalised score by $\tau\in\{0.25,1,4\}$ before the
softmax leaves the ListOps training trajectories numerically identical
($0.1336\pm0.0058$ in all three arms, train loss equal to five decimals). A cell-level check
confirms that $\tau$ reaches the module and changes its outputs on synthetic input; on
task data the row is constant, and a constant row is constant at every temperature.

\textbf{After training.} At 30K steps the unnormalised models' attention is still uniform in
every layer, and their key norms have equalised to a coefficient of variation of $0.003$,
without any selective structure emerging. The earlier account of these results, in which
large-norm keys dominate the match, is not what the trained models show; the state is
uniform attention from the first step.

\textbf{Stability constants on trained models.} On the depth-2 to depth-20 \PCT{} models of
Section~\ref{sec:depth}, the end-to-end $\ell^2$ response to zero-mean per-token phase
perturbations of the input is $10.8$--$13.2$ with no trend in depth; output norms are
$23.3$--$26.7$; and the contraction ratio of the stack on pairs of inputs is $0.11$--$0.19$.
These are the constants $K$, $Y_{\max}$ and the non-expansiveness that
Theorem~\ref{thm:depth} takes as hypotheses.

\subsection{What the family comparison establishes}
\label{sec:familysummary}

Read together, Sections~\ref{sec:normalisation}--\ref{app:mechanism} sharpen the claim of
this paper in four ways.

\textbf{The condition is on the score, and it is one condition.} Removing the
normalisation collapses ListOps and Needle to chance under the sigmoid and under the
L2-softmax alike (Table~\ref{tab:norm}); rescaling the unnormalised score by a fixed
temperature changes nothing (Section~\ref{app:mechanism}); and a normalised score placed
at the published Holographic scale $1/\sqrt{d_k}$ collapses on ListOps in the same way,
with its learnable coefficient frozen (Section~\ref{app:variants}). The failing state is
the same in every case, uniform attention from initialisation, and the cure is the same,
a cosine score at order-one scale.

\textbf{The gate has one condition, not several.} Every shape hypothesis about the gate
(smoothness, boundedness, growth order, competition among keys) reduces to whether the gate
passes gradient at its operating point: the ReLU, clamped-ReLU, squared and cubed gates all
recover under a one-line change of bias, on both tasks measured (Table~\ref{tab:init}), and
the L2-softmax factorises into the element-wise gate times a positive per-token scalar
(Section~\ref{sec:gate}).

\textbf{Within the family, the gate orders the members by task.} The three members tie on
retrieval (Copy and Needle at $1.000$); the L2-softmax leads on ListOps and at depth 20
($0.684$ against $0.633$); screening leads on FFT-MNIST ($0.693$ against $0.498$); the
gates that write to phase pay only on the compositional task, by an amount set by what they
write (Table~\ref{tab:variants}). \PCT{}'s distinction is that it is the only member that
requires no bias or scale choice anywhere in this suite.

\textbf{The complex score path separates from its real twin on exactly one task.} With
the same normalisation and the same gate, real cells solve Copy as the complex cells do
and stay at $0.26$--$0.32$ on Needle where the complex cells reach $1.000$
(Table~\ref{tab:realnorm}); the comparison of Section~\ref{sec:breadth} is read against
real screening for this reason.

\section{Breadth, depth and scale}
\label{sec:breadth}

\subsection{Protocol}
\label{sec:protocol}

We compare six cells, $\{\text{real},\text{complex}\}\times\{\text{softmax},\text{sigmoid},
\text{screening}\}$, on nine benchmark rows. The complex cells are \cell{complex\_softmax}
(the unnormalised standard form of Section~\ref{sec:definition}), \PCT{} and complex
screening. Parameter budgets are matched by $\mathrm{real\_dim}=\mathrm{complex\_dim}\times
1.41\approx\sqrt2$: real cells use dim 184 and head dimension 46, complex cells dim 128 and
head dimension 32. A complex linear map of width $D$ stores $2D^2$ real scalars (real and
imaginary parts) and a real linear map of width $\sqrt2 D$ stores $2D^2$, so the two sides
hold the same number of trainable scalars; the same $2D^2$ count applies to the products
formed per token if each complex weight--activation product is counted once, as its real
and imaginary channels, while a strict count of real multiplications (four per complex
product) puts the complex side at $4D^2$ against $2D^2$. The mid-scale Copy and Needle rows use dim 256
and depth 6 for both families; there the complex cells hold roughly twice the real-scalar
count (9.48M against 4.74M), and those rows are equal-width rather than parameter-matched
comparisons. All rows use three seeds and batch 32; screening cells run without the cosine
softmask. Evaluation differs by row and is stated in the table: the LRA-Text and LRA-Image
rows are held-out test accuracies with a deterministic evaluation over 2{,}048 samples; the
synthetic rows draw fresh samples on the fly and report the final-step reading of the
original sweep (a single batch of 32 sequences); the Needle row is a single seed
for every cell except \PCT{}. RadioML uses the 6\,dB subset of the public RML2016 mirror at
dim 64 and depth 3. Full configurations are in Appendix~\ref{app:details}.

The real softmax and sigmoid rows are the standard, unnormalised forms. Normalising the real
queries and keys as well lifts real sigmoid and softmax to $1.000$ on Copy $d{=}1{,}000$
and to $0.26$--$0.32$ on Needle (Section~\ref{app:realnorm}); the informative real
comparison in every row is therefore real screening, which is the strongest real cell
throughout.

\subsection{Cross-task comparison}
\label{sec:crosstask}

\begin{table}[H]
\centering\footnotesize
\setlength{\tabcolsep}{4pt}
\begin{tabular}{llcccccc}
\toprule
task & evaluation & \cell{r\_softmax} & \cell{r\_sigmoid} & \cell{r\_screen} & \cell{c\_softmax} & \PCT & \cell{c\_screen} \\
\midrule
Copy $d{=}500$ (mid)   & synthetic & 0.445 & 0.945 & \textbf{1.000} & 0.092 & \textbf{1.000} & 0.528 \\
Copy $d{=}2000$ (mid)  & synthetic & 0.100 & 0.083 & \textbf{1.000} & 0.100 & \textbf{1.000} & 0.683 \\
Needle $L{=}2048$ (mid)& synthetic, $N{=}1^{\ast}$ & 0.000 & 0.000 & \textbf{1.000} & 0.000 & \textbf{1.000} & \textbf{1.000} \\
ListOps $L{=}1024$     & synthetic & 0.146 & 0.177 & 0.698 & 0.104 & \textbf{0.854} & 0.833 \\
LRA-Text 4K            & held-out  & 0.627 & 0.627 & 0.632 & 0.628 & \textbf{0.665} & 0.635 \\
LRA-Image (CIFAR)      & held-out  & 0.222 & 0.224 & 0.318 & 0.232 & \textbf{0.372} & 0.341 \\
FFT-MNIST $t{=}16$     & synthetic & 0.48 & 0.50 & 0.86 & 0.71 & \textbf{0.93} & \textbf{0.94} \\
RadioML L1             & held-out  & 0.296 & 0.303 & \textbf{0.363} & 0.244 & 0.345 & 0.341 \\
RadioML L2             & held-out  & 0.224 & 0.241 & \textbf{0.389} & 0.273 & 0.352 & 0.337 \\
\bottomrule
\end{tabular}
\caption{Six cells across nine rows (means over three seeds; per-seed values in the public
archive). ``mid'' rows are equal-width (dim 256, depth 6). $^{\ast}$Single seed for every cell
except \PCT{} ($N{=}3$, all seeds at 1.000). The synthetic rows are final-step readings of
the original sweep (one batch of 32 sequences); the same cells re-measured with a
deterministic 2{,}048-sample evaluation at the Table~\ref{tab:norm} configuration appear
in Tables~\ref{tab:norm}--\ref{tab:family} and Sections~\ref{app:init}--\ref{app:mechanism}, and no claim in
the text rests on a synthetic-row margin smaller than $0.1$.}
\label{tab:headline}
\end{table}

On long-range copy, \PCT{} and real screening reach $1.000$ at both delays, and complex
screening reaches $1.000$ on two of three seeds at $d{=}2{,}000$; the softmax and sigmoid
cells of both fields sit at chance at $d{=}2{,}000$. On Needle, the three cells with a
scaled cosine or screening gate solve the task and the three others remain at zero, under the
single-seed caveat. On ListOps, \PCT{} and complex screening lead real screening by
$+0.14$--$0.16$, and the four remaining cells stay near chance. On LRA-Text all six cells
fall in a single band, $0.627$--$0.665$; \PCT{} leads every other cell by $+0.030$ to
$+0.038$ with the smallest seed variance of the six ($\pm0.009$), a margin that is
significant against the three real cells (Welch $t=3.6$--$4.7$, $\mathrm{df}\approx4$) but
not against the two complex ones. We claim a small, consistent advantage on this task, not a
qualitative separation.\footnote{The previous version of this manuscript reported $1.000$
for \PCT{} and complex screening on LRA-Text. Those readings were four-sample evaluations on
the training split, halted at the step where they read $1.0$; the held-out re-measurement
above supersedes them.} On LRA-Image the ordering is \PCT{} $>$ complex screening $>$ real
screening $>$ the three softmax/sigmoid cells, with the \PCT{}--baseline gap about twenty
times the seed standard deviation. On FFT-MNIST the two normalised complex cells tie at the top of
the six-cell sweep; at the 2K-step configuration with a 2{,}048-sample evaluation
(Table~\ref{tab:family}) the same two cells read $0.50$ and $0.69$. On RadioML, the top
three cells on both levels are real screening, \PCT{} and complex screening, within $0.05$ of
one another, and the three softmax and sigmoid cells lie $0.05$--$0.15$ below.

In every row, the top of the table is occupied by the normalised complex cells and real
screening. Two projections of Table~\ref{tab:headline} make the two comparisons that
motivate this paper explicit.

\paragraph{Complex attention before and after the score condition.}
Table~\ref{tab:pct_vs_csoftmax} sets \PCT{} beside the unnormalised complex softmax, the
standard form in which complex attention had been reported to fail outside complex
domains. The unnormalised cell is at chance on both long-range Copy rows, on Needle and on
ListOps, and it never leads a row; \PCT{} reaches $1.000$ on the three retrieval rows and
leads or ties every other row. The two cells differ in one respect, the normalisation of
Section~\ref{sec:normalisation}: the failure attributed to complex attention was a failure
of the unnormalised score.

\begin{table}[H]
\centering\footnotesize
\setlength{\tabcolsep}{6pt}
\begin{tabular}{lcc}
\toprule
task & \cell{c\_softmax} (unnormalised) & \PCT \\
\midrule
Copy $d{=}500$ (mid)   & 0.092 & \textbf{1.000} \\
Copy $d{=}2000$ (mid)  & 0.100 & \textbf{1.000} \\
Needle $L{=}2048$ (mid)& 0.000 & \textbf{1.000} \\
ListOps $L{=}1024$     & 0.104 & \textbf{0.854} \\
LRA-Text 4K            & 0.628 & \textbf{0.665} \\
LRA-Image (CIFAR)      & 0.232 & \textbf{0.372} \\
FFT-MNIST $t{=}16$     & 0.71  & \textbf{0.93} \\
RadioML L1             & 0.244 & \textbf{0.345} \\
RadioML L2             & 0.273 & \textbf{0.352} \\
\bottomrule
\end{tabular}
\caption{\PCT{} against the unnormalised complex softmax, from Table~\ref{tab:headline}.
The cells share projections, values, feed-forward and training; they differ only in the
normalisation of the score.}
\label{tab:pct_vs_csoftmax}
\end{table}

\paragraph{\PCT{} against the real cells.}
Table~\ref{tab:pct_vs_real} sets \PCT{} beside the three real cells at the matched budget
of Section~\ref{sec:protocol}. Against real softmax and real sigmoid, \PCT{} is ahead on
every row, by a qualitative margin on the retrieval and reasoning rows (chance against
$1.000$ on Copy $d{=}2000$ and Needle, $0.15$--$0.18$ against $0.854$ on ListOps) and by
$0.04$--$0.15$ on the held-out and signal rows. Against real screening, the strongest real
cell, \PCT{} ties at $1.000$ on the three retrieval rows and leads on ListOps ($+0.16$),
LRA-Text ($+0.03$), LRA-Image ($+0.05$) and FFT-MNIST ($+0.07$); on RadioML the two are
within $0.04$ of one another. A complex transformer that satisfies the score condition is
therefore not merely rescued from its earlier failure: at equal stored scalars it is at or
above the real cells on the synthetic, language and image rows and level with them on the
physical-signal rows.

\begin{table}[H]
\centering\footnotesize
\setlength{\tabcolsep}{5pt}
\begin{tabular}{lcccc}
\toprule
task & \cell{r\_softmax} & \cell{r\_sigmoid} & \cell{r\_screen} & \PCT \\
\midrule
Copy $d{=}500$ (mid)   & 0.445 & 0.945 & \textbf{1.000} & \textbf{1.000} \\
Copy $d{=}2000$ (mid)  & 0.100 & 0.083 & \textbf{1.000} & \textbf{1.000} \\
Needle $L{=}2048$ (mid)& 0.000 & 0.000 & \textbf{1.000} & \textbf{1.000} \\
ListOps $L{=}1024$     & 0.146 & 0.177 & 0.698 & \textbf{0.854} \\
LRA-Text 4K            & 0.627 & 0.627 & 0.632 & \textbf{0.665} \\
LRA-Image (CIFAR)      & 0.222 & 0.224 & 0.318 & \textbf{0.372} \\
FFT-MNIST $t{=}16$     & 0.48  & 0.50  & 0.86  & \textbf{0.93} \\
RadioML L1             & 0.296 & 0.303 & \textbf{0.363} & 0.345 \\
RadioML L2             & 0.224 & 0.241 & \textbf{0.389} & 0.352 \\
\bottomrule
\end{tabular}
\caption{\PCT{} against the three real cells, from Table~\ref{tab:headline}. Real cells use
dim 184 (head dimension 46) against dim 128 (head dimension 32) for \PCT{}, so both sides
hold the same number of stored scalars (Section~\ref{sec:protocol}); the mid rows are
equal-width.}
\label{tab:pct_vs_real}
\end{table}

\subsection{Depth and scale}
\label{sec:depth}

If per-layer phase handling composes stably, accuracy should survive depth and improve with
size. Both hold on ListOps at length 1{,}024 (Table~\ref{tab:depthscale}).

\begin{table}[H]
\centering\footnotesize
\begin{minipage}[t]{0.52\textwidth}
\centering
\begin{tabular}{rrcc}
\toprule
depth & params & 30K steps ($N{=}3$) & 100K (seed 0) \\
\midrule
2  & 0.40M & 0.623\pm0.007 & 0.636 \\
4  & 0.80M & 0.623\pm0.014 & --- \\
6  & 1.19M & 0.634\pm0.011 & 0.674 \\
10 & 1.98M & 0.654\pm0.015 & 0.697 \\
14 & 2.77M & 0.620\pm0.040 & --- \\
20 & 3.96M & 0.633\pm0.006 & 0.650 \\
\bottomrule
\end{tabular}
\end{minipage}\hfill
\begin{minipage}[t]{0.46\textwidth}
\centering
\setlength{\tabcolsep}{4pt}
\begin{tabular}{lrcc}
\toprule
dim/depth & params & accuracy & eval loss \\
\midrule
64/2   & 0.10M & 0.623\pm0.008 & 0.996\pm0.018 \\
128/4  & 0.80M & 0.653\pm0.005 & 0.925\pm0.014 \\
192/6  & 2.67M & 0.668\pm0.007 & 0.885\pm0.020 \\
256/8  & 6.32M & 0.681\pm0.008 & 0.849\pm0.017 \\
\bottomrule
\end{tabular}
\end{minipage}
\caption{Left: depth robustness of \PCT{} at fixed width 128 (ListOps $L{=}1024$, $N{=}3$,
2{,}048-sample evaluation), with a 100K-step continuation of seed 0. Right: scaling at
constant aspect ratio $\mathrm{dim}/\mathrm{depth}=32$, 60K steps, $N{=}3$.}
\label{tab:depthscale}
\end{table}

\textbf{Depth robustness.} At the 30K-step budget every depth from 2 to 20 trains stably to
the task plateau ($0.62$--$0.65$); there is no depth-related loss of accuracy. The flatness
across depth is a property of the budget, not of depth: continuing depth 2, 6, 10 and 20 from
their 30K checkpoints for a further 70K steps raises the converged shallow control by
$0.013$ and the deeper models by $0.040$ and $0.043$ (depth 6 and 10), with depth 20 rising
by $0.018$ and no divergence. Depth robustness is a property of the family, not of the
canonical gate: the L2-softmax member at depth 20 reaches $0.684\pm0.002$ at the same
budget.

\textbf{Model-size scaling.} Growing width and depth together at $\mathrm{dim}/\mathrm{depth}
=32$ from 0.10M to 6.32M parameters lowers evaluation loss strictly monotonically, by
$-0.081$ per decade of parameters ($R^2=0.999$ over four sizes, 1.79 decades), with
accuracy rising $+0.032$ per decade. The largest model is converged (train-loss slope
$-0.007\pm0.022$ per 10K steps at the end), so the fit is not an artefact of the largest arm
being cut short. This is one task and one architecture; it establishes that the family
improves with size, not a general scaling law.

\textbf{Measured stability constants.} We also measured, on the trained depth models, the
quantities that the stability theorem of Appendix~\ref{app:lean} bounds: the end-to-end
response to zero-mean phase perturbations of the input varies only between 10.8 and 13.2
from depth 2 to 20 (no growth with depth), output norms are flat (23.3--26.7), and the
contraction ratio of the stack on pairs of inputs is 0.11--0.19. The depth-uniform
stability the theorem predicts is what trained models exhibit.

\section{Path-X}
\label{sec:pathx}

Path-X (Long Range Arena; Tay et al., 2021) is a 16{,}384-token binary connectivity task
on which every softmax Transformer and efficient-attention variant scores at chance, and
which only complex-eigenvalue linear recurrences (S4, DSS, S4D, S5, LRU) and the
EMA-plus-attention hybrid MEGA have solved. The task is purely positional: a 16k-token
reach for which selection alone is not enough. We therefore pair the family's complex
screening gate, which supplies selection, with a \emph{phase-coherent recurrence} (PCR),
which supplies position.

\paragraph{Model.} The recurrence is a complex diagonal linear recurrence
$h_t=\lambda\odot h_{t-1}+\gamma\odot(Bx_t)$ applied bidirectionally by FFT convolution,
in the stable parameterisation of the LRU (Orvieto et al., 2023):
$\lambda=\exp(-\exp(\nu)+i\theta)$ with ring initialisation
$|\lambda|\in[0.999,0.9999]$, restricted phase $\theta\in[0,\pi/10]$ and
$\gamma$-normalisation. The complex eigenvalue is an input-independent phase rotation per
step, coherent long-range transport, while the screening gate keeps its real-valued,
non-competing selection. The released model has six PCR blocks of width 128 and state
size 256, with a chunked complex screening block (four heads, chunk 1{,}024) after the
third and fifth PCR blocks, 2.01M parameters in total; it is trained for 250K steps at
batch 32 with no task-specific tuning (Appendix~\ref{app:details}).

\paragraph{Result.} Under the rule-compliant setting of the leaderboard (raw one-dimensional
token sequence in, one binary label out; no two-dimensional structure, auxiliary
supervision or handcrafted features), the model reaches $\mathbf{92.71\pm0.89}$ test
accuracy over three seeds (held-out split, deterministic sweep over 20{,}000 samples; best
seed 93.50).\footnote{Weights are released on the Hugging Face Hub:
\url{https://huggingface.co/complexedleo/pcr-screening-pathx}.} Table~\ref{tab:pathx}
places it among the models that clear the 16k reach.

\begin{table}[H]
\centering\small
\begin{tabular}{lr}
\toprule
model & Path-X \\
\midrule
Transformer; Reformer, Performer, Linformer, BigBird, Luna-256 & chance ($\approx$50) \\
S4-v1 & 88.10 \\
DSS & 89.72 \\
S4D-LegS & 91.95 \\
\textbf{complex screening + phase-coherent recurrence (ours)} & \textbf{92.71\pm0.89} \\
S4D-Inv & 92.80 \\
MEGA-chunk & 93.81 \\
LRU & 94.20 \\
S4 (v2, S4-LegS) & 96.35 \\
MEGA & 97.98 \\
S5 & 98.58 \\
\bottomrule
\end{tabular}
\caption{Rule-compliant Path-X test accuracy (\%). Ours is the mean over three seeds
$\pm$ sample s.d., a first-pass run. Baselines: S4-v1/S4-LegS and MEGA from Ma et al.
(2023); S4D and DSS from Gu et al. (2022); S5 and LRU from Orvieto et al. (2023).}
\label{tab:pathx}
\end{table}

\paragraph{Genuinely complex-valued, in a measurable sense.} We count, for each model, the
share of trainable real scalars that belong to tensors representing complex numbers
(real/imaginary pairs and the polar parameters of complex eigenvalues). In our model 91.6\%
of the 2.01M trainable parameters are complex-valued, and complex arithmetic carries both
the transport path (the recurrence) and the matching path (the screening attention). In S4
at its published Path-X configuration, 38.2\% of the 1.29M trainable parameters are
complex-valued, all inside the state-space kernel; the mixing, gating and readout are real.
In both models the inter-block residual stream is real. To our
knowledge this is the first solution of Path-X by a neural network that is complex-valued in
this sense.

\paragraph{Two ablations.} \emph{(i) Phase is necessary.} Fixing $\theta=0$ (real
eigenvalues, otherwise identical, the analogue of S4D-Real) never leaves chance: position
cannot form without phase. \emph{(ii) Phase bandwidth is necessary for generalisation, not
just for fitting.} A narrow initial phase band $\theta\in[0,\pi/50]$ drives training
accuracy to $0.999$ while test accuracy stays at $0.53$; widening to $[0,\pi/10]$ recovers
$0.92$ on the test split. Too little phase and the model memorises; none at all and it
cannot learn.

\paragraph{Takeaway.} We do not claim to beat MEGA or S5. The claim is structural: the
positional mechanism of a screening layer can be a phase-coherent recurrence, and with it a
network whose core operators are complex-valued clears the one long-range task no softmax
Transformer solves.

\section{Related work}
\label{sec:related}

\textbf{Complex-valued networks.} Deep Complex Networks (Trabelsi et al., 2018) supplied
complex convolution, batch normalisation and initialisation; subsequent work applies
complex-valued models where phase is physically meaningful (speech, MRI, radar, wireless,
Fourier-domain signals), consolidated in libraries such as \texttt{torchcvnn} (Fix et al.,
2025). Their role as general-purpose sequence architectures has remained open; this paper
addresses it from the attention side.

\textbf{Complex attention.} Eilers and Jiang (2023) introduced complex scaled dot-product
attention and complex layer normalisation and showed feasibility on MusicNet. The
Holographic Transformer (Hao et al., 2025) adds a phase-interference decay to a norm-divided
score and rotates values by the query--key phase mismatch, for PolSAR and wireless data.
Both keep a row-normalised softmax; ours differs by making the scaled cosine score the
defining property and treating the gate as a free choice. A Holographic-style variant at our
score scale trains to parity with the L2-softmax on our tasks (Section~\ref{app:variants}).

\textbf{Alternatives to softmax.} Sigmoid attention (Ramapuram et al., 2025) shows that
element-wise gating without row normalisation is a drop-in replacement for softmax in
real-valued models; ReLU and polynomial attention (Wortsman et al., 2023; Saratchandran et
al., 2024) remove the softmax as well; screening (Nakanishi, 2026) thresholds absolute
relevance per key and is our strongest real-valued baseline. These works are motivated by
efficiency and regularity in the real domain; in the complex domain we find that the choice
among such gates is secondary to the score they act on.

\textbf{Query--key normalisation.} Normalising queries and keys bounds attention logits
(Henry et al., 2020), underlies scaled cosine attention in Swin~V2 (Liu et al., 2022),
prevents logit divergence at scale in ViT-22B (Dehghani et al., 2023), and is taken to its
limit by nGPT (Loshchilov et al., 2024). In real-valued models it is reported as a
stabiliser at scale; we find the same operation to be the difference between training and
not training at small initialisation scale, the other end of one score-scale axis
(Section~\ref{sec:normalisation}), and that the quantity it isolates in the complex case is
phase.

\textbf{Long-range recurrence.} Path-X was first solved by S4 (Gu et al., 2022a) and
subsequently by DSS, S4D, S5, LRU and MEGA (Gupta et al., 2022; Gu et al., 2022b; Smith et
al., 2023; Orvieto et al., 2023; Ma et al., 2023). S4D's finding that real-eigenvalue
diagonal SSMs do not solve Path-X is the precedent for our phase-necessity ablation. These
models keep complex numbers inside the state-space kernel; ours carries complex arithmetic
through the matching path as well.

\section{Conclusion}

A complex-valued Transformer trains across ordinary discrete tasks when its query--key
match is a scaled cosine score, and on the hierarchical and positional tasks in our data
only then. Under that condition the gate is a
free choice, and the resulting family matches or exceeds the strongest real-valued baseline
across the task categories we tested, holds its accuracy to depth 20, improves
log-linearly with size, and, combined with a phase-coherent recurrence, solves Path-X with
91.6\% of its parameters complex-valued. These are, within our benchmark scope, signs of
generalisation in complex-valued neural networks that had not been recorded before.

\appendix
\section{Learning-rate and batch robustness}
\label{app:robustness}

Table~\ref{tab:robust} reproduces the robustness sweep of the previous version: Copy
$d{=}1{,}000$ at small scale (dim 32, depth 2) across learning rates, and mid-scale
long-range Copy (dim 256, depth 6) across effective batch sizes. \PCT{} is the only cell that
retains full accuracy across the whole window; the other cells succeed inside narrower
corridors.

\begin{table}[H]
\centering\small
\begin{tabular}{lcccccc}
\toprule
cell & LR $10^{-3}$ & LR $3\cdot10^{-3}$ & LR $10^{-2}$ & $b{=}8$, $d{=}2000$ & $b{=}32$, $d{=}2000$ & $b{=}256$, $d{=}500$ \\
\midrule
\cell{real\_softmax}    & 0.37 & 0.69 & 0.31 & 0.04 & 0.10 & 0.05 \\
\cell{real\_sigmoid}    & 0.07 & \textbf{1.00} & 0.22 & 0.04 & 0.10 & 0.07 \\
\cell{real\_screen}     & 0.55 & 0.70 & \textbf{1.00} & 0.69 & \textbf{1.00} & 0.79 \\
\cell{complex\_softmax} & 0.09 & 0.39 & 0.35 & 0.08 & 0.08 & 0.06 \\
\PCT                    & \textbf{1.00} & \textbf{1.00} & \textbf{1.00} & \textbf{1.00} & \textbf{1.00} & \textbf{1.00} \\
\cell{complex\_screen}  & 0.39 & \textbf{1.00} & 0.08 & 0.03 & 0.53 & 0.31 \\
\bottomrule
\end{tabular}
\caption{Copy accuracy across learning rates (left) and effective batch sizes (right).}
\label{tab:robust}
\end{table}

\section{Formal guarantees (machine-checked)}
\label{app:lean}

The statements below are formalised in Lean~4 with Mathlib, with no \texttt{sorry} and no
axioms beyond Lean's standard three (\texttt{propext}, \texttt{Classical.choice},
\texttt{Quot.sound}); the development is in the \texttt{lean/} directory of the released
repository. Tokens are $x_i\in\mathbb{C}^d$; a global rotation is $R(\varphi)X=
(e^{i\varphi}x_i)_i$ and a per-token rotation $P(\varepsilon)X=(e^{i\varepsilon_i}x_i)_i$.

\begin{proposition}[Global phase equivariance of the family]
\label{prop:equiv}
Let $A$ be a layer with complex-linear $W_q,W_k,W_v,W_o$, the normalised score matrix
$S(X)_{ij}=\Re\langle\bar q_i,\bar k_j\rangle$, and any real-valued gate $g$ mapping
$N\times N$ real matrices to $N\times N$ real matrices, so that
$A(X)_i=W_o\sum_j g(S(X))_{ij}\,W_v x_j$. Then $A(R(\varphi)X)=R(\varphi)\,A(X)$ for
every $\varphi$.
\end{proposition}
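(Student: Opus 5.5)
The plan is to show that the score matrix is invariant under the global rotation, $S(R(\varphi)X)=S(X)$, and then use complex-linearity of $W_v$ and $W_o$ to pull the factor $e^{i\varphi}$ out of the value aggregation. The argument is a pointwise cancellation and needs no earlier result beyond the definitions in Section~\ref{sec:definition}.

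\emph{Step 1: projections.} By complex-linearity, $W_q(e^{i\varphi}x_i)=e^{i\varphi}q_i$, and likewise $k_j\mapsto e^{i\varphi}k_j$ and $v_j\mapsto e^{i\varphi}v_j$.

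\emph{Step 2: normalisation.} Since $|e^{i\varphi}|=1$, we have $\lVert e^{i\varphi}q_i\rVert=\lVert q_i\rVert$, so $\bar q_i\mapsto e^{i\varphi}\bar q_i$ and $\bar k_j\mapsto e^{i\varphi}\bar k_j$. This step needs a convention for zero vectors. I would define $\bar q=q/\lVert q\rVert$ with the Lean/Mathlib convention that division by $0$ yields $0$. Then $q=0$ if and only if $e^{i\varphi}q=0$, and the identity $\overline{e^{i\varphi}q}=e^{i\varphi}\bar q$ holds in both cases.

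\emph{Step 3: score invariance.} Sesquilinearity gives $\langle e^{i\varphi}\bar q_i,e^{i\varphi}\bar k_j\rangle=\overline{e^{i\varphi}}e^{i\varphi}\langle\bar q_i,\bar k_j\rangle=\langle\bar q_i,\bar k_j\rangle$, because $\overline{e^{i\varphi}}e^{i\varphi}=1$. Taking real parts yields $S(R(\varphi)X)=S(X)$ entrywise. Since $g$ is an arbitrary function of the matrix alone, it follows that $g(S(R(\varphi)X))=g(S(X))$; no property of $g$ beyond being a function is used.

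\emph{Step 4: aggregation.} Each weight $g(S)_{ij}$ is real, so it commutes with the scalar $e^{i\varphi}$. Hence
\[
\sum_j g(S)_{ij}\,e^{i\varphi}v_j=e^{i\varphi}\sum_j g(S)_{ij}\,v_j .
\]
Applying complex-linearity of $W_o$ then gives $A(R(\varphi)X)_i=e^{i\varphi}A(X)_i$ for every $i$. This is the claim $A(R(\varphi)X)=R(\varphi)A(X)$.

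\emph{Main obstacle.} The only delicate point is the normalisation step, specifically the zero-norm case and the fact that the norm is invariant under a unit complex scalar ($\lVert c\,q\rVert=|c|\,\lVert q\rVert$ with $|c|=1$). I would handle it first with a small lemma, $\overline{c\,q}=c\,\bar q$ for $|c|=1$, proved by case split on $q=0$. The remaining steps are rewriting with linearity and the identity $\overline{e^{i\varphi}}e^{i\varphi}=1$. The $\sqrt{d_h}$ factor and any bias inside $g$ are constants, so they do not affect the argument.
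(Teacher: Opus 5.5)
Your proof is correct and follows the paper's argument step for step: complex-linearity and unit-modulus norm invariance give $\bar q_i,\bar k_j$ the factor $e^{i\varphi}$, sesquilinearity cancels it so that $S$ and hence $g(S)$ are invariant, and $e^{i\varphi}$ then comes out of the value sum and through $W_o$. Your treatment of the zero-vector case in the normalisation is a detail the paper's written proof leaves implicit. In Step 4, note that what lets $e^{i\varphi}$ factor out is the invariance of $g(S)$, not its real-valuedness, since complex weights would commute with $e^{i\varphi}$ just as well.
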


\emph{Proof.} $W_q$ and $W_k$ are complex-linear and $\lVert e^{i\varphi}v\rVert=\lVert
v\rVert$, so $\bar q_i,\bar k_j$ acquire the factor $e^{i\varphi}$; sesquilinearity cancels
it in $\langle e^{i\varphi}\bar q_i,e^{i\varphi}\bar k_j\rangle=\langle\bar q_i,\bar
k_j\rangle$, so $S$ is invariant and hence so is $g(S)$; the real weights then let
$e^{i\varphi}$ factor out of the value sum. (Lean: \texttt{family\_L1a}; instances
\texttt{softmaxL2\_L1a} for the L2-softmax and \texttt{canonical\_L1a\_via\_family} for
the sigmoid.) Element-wise gating is not required; the earlier Theorem 1 of this line of
work, which assumed it, is the special case \texttt{theorem1\_is\_family\_L1a\_special\_case}.

\begin{theorem}[Depth-uniform phase stability]
\label{thm:depth}
Let $A_1,\dots,A_L$ be layers such that each is equivariant as in
Proposition~\ref{prop:equiv}, non-expansive in $\ell^2$, and $K$-Lipschitz in its response
to zero-mean per-token phase perturbations, and let the stack outputs be bounded by
$Y_{\max}$. Then for every input $X$ and perturbation $\varepsilon$ with
$\lVert\varepsilon\rVert_\infty\le\delta$,
\[
\lVert (A_L\circ\cdots\circ A_1)(P(\varepsilon)X)-(A_L\circ\cdots\circ A_1)(X)\rVert_2
\;\le\;(2K+Y_{\max})\,\delta ,
\]
with a constant independent of $L$.
\end{theorem}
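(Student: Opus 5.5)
The strategy is to split the per-token perturbation into a global rotation plus a zero-mean residual. The global part passes through the whole stack exactly, by equivariance. The zero-mean part is handled once, at the first layer, by the $K$-Lipschitz hypothesis; non-expansiveness then carries that error through every later layer at no cost. This is why no factor of $L$ appears.

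Fix $X$ and $\varepsilon$ with $\lVert\varepsilon\rVert_\infty\le\delta$, and write $F=A_L\circ\cdots\circ A_1$. Let $\bar\varepsilon=\frac1N\sum_i\varepsilon_i$ and $\eta=\varepsilon-\bar\varepsilon\mathbf 1$. Then $\eta$ has zero mean, $\lVert\eta\rVert_\infty\le2\delta$, and $P(\varepsilon)X=R(\bar\varepsilon)P(\eta)X$. Applying Proposition~\ref{prop:equiv} layer by layer gives $F(R(\bar\varepsilon)Y)=R(\bar\varepsilon)F(Y)$ for every $Y$. The triangle inequality then splits the quantity to be bounded into two terms:
\[
\lVert F(P(\varepsilon)X)-F(X)\rVert_2\le\lVert R(\bar\varepsilon)\bigl(F(P(\eta)X)-F(X)\bigr)\rVert_2+\lVert R(\bar\varepsilon)F(X)-F(X)\rVert_2 .
\]

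For the first term, $R(\bar\varepsilon)$ is an isometry and can be dropped. Write $F=G\circ A_1$ with $G=A_L\circ\cdots\circ A_2$; $G$ is non-expansive as a composition of non-expansive maps. The term is therefore at most $\lVert A_1(P(\eta)X)-A_1(X)\rVert_2$. The $K$-Lipschitz hypothesis for $A_1$ on zero-mean perturbations bounds this by $K\lVert\eta\rVert_\infty\le 2K\delta$. I will state the Lipschitz hypothesis in the $\ell^\infty$ norm of the perturbation; if it is given in another norm, the constant changes by a fixed factor.

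For the second term, use $\lVert (e^{i\bar\varepsilon}-1)Y\rVert_2=|e^{i\bar\varepsilon}-1|\,\lVert Y\rVert_2\le|\bar\varepsilon|\,\lVert Y\rVert_2$. Since $|\bar\varepsilon|\le\delta$ and $\lVert F(X)\rVert_2\le Y_{\max}$, the term is at most $Y_{\max}\delta$. Adding the two bounds gives $(2K+Y_{\max})\delta$, independent of $L$.

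The main obstacle is setting up the hypotheses so that the zero-mean Lipschitz bound can be applied where it is needed. The plan applies it only at the first layer, to the raw input $X$. This avoids needing it for intermediate representations, where the perturbation would no longer be a phase rotation. The price is that the constant $K$ must refer to $A_1$, or to a uniform bound over all layers. The other point needing care is verifying the factorisation $P(\varepsilon)=R(\bar\varepsilon)P(\eta)$; this holds because rotations commute, since $e^{i\varepsilon_i}=e^{i\bar\varepsilon}e^{i\eta_i}$. In Lean, the remaining work is bookkeeping: induction on $L$ for equivariance and non-expansiveness of compositions, plus the elementary bound $|e^{it}-1|\le|t|$.
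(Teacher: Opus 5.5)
Your proposal is correct and follows the paper's proof. The paper also splits $\varepsilon$ into its mean, which passes exactly through the equivariant stack at cost $|e^{i\bar\varepsilon}-1|\,Y_{\max}\le Y_{\max}\delta$ with no second-order remainder (\texttt{lemmaA\_bound}), and a zero-mean residual with $\lVert\eta\rVert_\infty\le 2\delta$; that residual is absorbed by the entry layer's $K$-Lipschitz bound and carried through the non-expansive rest of the stack (\texttt{theorem5}), which gives the same constant $(2K+Y_{\max})\delta$.
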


The proof splits $\varepsilon$ into its mean, which passes through the equivariant stack
exactly (\texttt{lemmaA\_bound}, without an $O(\bar\varphi^2)$ remainder), and a zero-mean
residual absorbed by the entry layer and propagated through the non-expansive substrate
(\texttt{theorem5}). The single-layer constant $K$ is derived from the gate: if the gate is
bounded by $F$ and entrywise $L_g$-Lipschitz on $[-1,1]$ and the values are bounded by
$W_{\max}$, then $K\le\sqrt{N}\,N\,(F+2L_g)\,W_{\max}$ (\texttt{phase\_lipschitz},
\texttt{theorem5\_for\_family\_gateK}), which every member of the family satisfies. The
Doeblin contraction used in the earlier cascade argument is also proved (\texttt{lemmaC})
but is not needed for the bound above.

\textbf{Scope.} Non-expansiveness of the substrate and the output bound $Y_{\max}$ are
hypotheses about trained weights, not consequences of the architecture; the constants
measured in Section~\ref{app:mechanism} are their empirical values on the depth models.
Neither statement concerns performance: the unnormalised cell of
Section~\ref{sec:normalisation} is equally equivariant and does not train, and the
conditions of Sections~\ref{sec:normalisation}--\ref{sec:gate} are empirical.

\textbf{Two auxiliary facts} used in the text are also machine-checked: the normalised
score is invariant under positive real scaling of $q$ and $k$ while the unnormalised score
is not bounded in magnitude for a fixed direction (\texttt{score\_scale\_invariant},
\texttt{unnormalized\_magnitude\_dominates}); and a ReLU gate whose bias satisfies
$b<-1$ (in units where $|s_{ij}|\le1$) has output identically zero and Fr\'echet derivative
zero with respect to all of $(W_q,W_k,W_v,W_o,b)$, whereas the sigmoid gate has positive
value and positive slope at every score and bias (\texttt{dead\_relu\_hasFDerivAt\_zero},
\texttt{sigmoid\_gate\_alive}). The hypothesis $b<-1$ is a statement about the operating
point, not about every configuration in this paper: the standard bias gives
$b/\sqrt{d_h}=-1.23$ on Copy, which satisfies it, and $-0.98$ on the shorter FFT-MNIST
sequence, which does not, matching the two columns of Table~\ref{tab:init}.

\section{List of tests conducted}
\label{app:tests}

The following tests support the tables of this paper. Each cell~$\times$~task combination
listed below has at least one trained checkpoint and a per-seed metrics record in the public
data archive (Appendix~\ref{app:data}).

\textbf{Six-cell comparison} (Table~\ref{tab:headline} and the archive):
\begin{itemize}
  \item Copy Memory $K{=}10$ at $d\in\{100,200,500,1000,2000,5000\}$
  \item Needle-in-a-Haystack $L{=}2048$ and $L{=}1024$
  \item ListOps small-scale ($L{=}128$), synthetic generator with \texttt{max\_depth=2},
  \texttt{max\_args=3}
  \item ListOps mid-scale ($L{=}1024$), same generator, dim 128, depth 4
  \item ListOps standard ($L{=}1024$, \texttt{max\_depth=6}, \texttt{max\_args=5}), for the
  depth and scaling suites
  \item LRA-Text 4K and LRA-Image (held-out re-evaluation)
  \item FFT-MNIST at $t\in\{8,16\}$
  \item phase memory $K{=}8$, delay 30
  \item multi-label spectral detection (previously ``multi-pitch'') $K{=}16$ and $K{=}8$
  \item RadioML L1 and L2 (RML2016, 6\,dB)
  \item MusicNet (10-piece test) L1 and L2
\end{itemize}

\textbf{Isolation suite} on the scaled cosine score (Tables~\ref{tab:norm}--\ref{tab:family}
and Sections~\ref{app:init}--\ref{app:mechanism}; Copy $d{=}1000$, ListOps, FFT-MNIST at
dim 128, depth 4; Needle at dim 256, depth 6; three seeds each):
\begin{itemize}
  \item normalisation toggled under the sigmoid gate and under the row softmax
  \item the two-axis gate suite \cell{complex\_sigmoid}, \cell{complex\_softplus},
  \cell{complex\_cubic}, \cell{complex\_clamped\_relu}, \cell{complex\_relu} on Copy $d{=}1000$;
  \cell{complex\_cubic} and \cell{complex\_clamped\_relu} extended to Copy
  $d\in\{100,200,500,1000\}$, FFT-MNIST and multi-label spectral detection
  \item bias-initialisation sweep (bias $0$ against $-\log N$) for ReLU, clamped ReLU, squared
  and cubed gates
  \item gate variants: complex-valued gate, complex-normalised softmax, Holographic-style at
  the published and at the matched score scale
  \item temperature $\tau\in\{0.25,1,4\}$ on the unnormalised score (ListOps)
  \item normalised real sigmoid and softmax on Copy $d{=}1000$ and Needle
  \item L2-softmax at depth 20 on ListOps (30K steps)
\end{itemize}

\textbf{Learning-rate and batch robustness} (Table~\ref{tab:robust}): learning-rate sweep on
Copy $d{=}1000$ at $\{10^{-3},3\cdot10^{-3},10^{-2}\}$; batch sweep on long-range Copy at
$b\in\{8,32,256\}$.

\textbf{Depth and scaling} (Table~\ref{tab:depthscale}): \PCT{} on ListOps $L{=}1024$ at
depth $\in\{2,4,6,10,14,20\}$, 30K steps, three seeds, with a 100K-step continuation at
depth 20; four sizes at dim/depth $=32$, 60K steps, three seeds.

\textbf{Substrate ablations on \PCT{}}: native complex linear against two real linears on
$(\Re,\Im)$, and the real-value, real-query--key and embedding ablations; in the public
data archive.

\textbf{Path-X} (Table~\ref{tab:pathx}): the released screening + PCR model, three seeds,
and the phase-zero ablation of Section~\ref{sec:pathx}.

\section{Experimental details}
\label{app:details}

\subsection{Tasks}
\begin{itemize}
  \item \textbf{Copy Memory} (synthetic, on the fly): vocabulary 16, $K{=}10$ source tokens
  followed by \texttt{delay} blanks; the target repeats the source. Sequence length
  $2K+\texttt{delay}$. Chance 0.10.
  \item \textbf{Needle-in-a-Haystack} (synthetic): vocabulary 64, length 2{,}048; the needle
  occupies a fixed index (the midpoint) and is drawn from the same vocabulary as the
  distractors, so the task requires positional recall rather than content-based search and
  is not directly comparable to content-addressed NIAH benchmarks. Chance $1/64$.
  \item \textbf{ListOps} (synthetic, on the fly): a ListOps-style generator, not the LRA
  release, and not comparable to published LRA numbers. Table~\ref{tab:headline} uses
  \texttt{max\_depth=2}, \texttt{max\_args=3} at length 1{,}024, i.e.\ expressions of at
  most 21 tokens followed by padding; Tables~\ref{tab:norm}, \ref{tab:family},
  \ref{tab:depthscale} use \texttt{max\_depth=6}, \texttt{max\_args=5}. Chance 0.10.
  \item \textbf{FFT-MNIST}: $28\times28$ MNIST, bilinearly down-sampled to $16\times16$,
  two-dimensional FFT, flattened to 256 complex tokens. Measured on training-tensor samples.
  \item \textbf{LRA-Text 4K}: byte-level IMDB, length 4{,}096, balanced rebuild of the
  dataset, held-out test split.
  \item \textbf{LRA-Image}: CIFAR-10 grey-scale pixel sequences, length 1{,}024, held-out
  test split.
  \item \textbf{RadioML}: 6\,dB SNR subset of the public RML2016 mirror, 11 classes, raw
  I/Q samples $x=I+jQ$; L1 and L2 denote the two difficulty levels of the released
  benchmark script.
  \item \textbf{Path-X}: LRA Pathfinder at resolution 128, length 16{,}384, held-out split
  of 20{,}000 samples for the final sweep.
\end{itemize}
The following tasks are in the record (Appendix~\ref{app:tests}) but not in the tables of
this version:
\begin{itemize}
  \item \textbf{ListOps small-scale}: the same generator with \texttt{max\_args=3},
  \texttt{max\_depth=2}, \texttt{max\_seq\_len=128}.
  \item \textbf{phase memory}: $K$ source phases drawn uniformly on $S^1$, \texttt{delay}
  blanks, target the source phases; sequence length $2K+\texttt{delay}$.
  \item \textbf{multi-label spectral detection}: $K$ candidate pitches,
  $n_{\mathrm{active}}$ active per sample, $n_{\mathrm{samples}}$ time steps; multilabel
  ($K$-way sigmoid); the label depends on amplitude only, phase is randomised per sample.
  \item \textbf{MusicNet} (small): a 10-piece test split, \texttt{seq\_len=64} per excerpt,
  multilabel pitch identification.
\end{itemize}

\subsection{Cells}
\begin{itemize}
  \item \textbf{\PCT{}} (\cell{complex\_sigmoid}): native complex-linear projections, complex
  L2 normalisation of $q$ and $k$, score $\Re\langle\bar q,\bar k\rangle\sqrt{d_h}$,
  element-wise sigmoid with bias initialised to $-\log N$, real weights on complex values.
  \item \textbf{L2-softmax} (\cell{complex\_softmax\_l2}): as \PCT{} with the sigmoid
  replaced by a row softmax over $j$.
  \item \textbf{Complex screening} (\cell{complex\_screen}): as \PCT{} with the gate
  $r^2\,\mathrm{relu}(s-t)^2$, TanhNorm on the complex aggregate and a modReLU Hadamard
  gate; softmask off.
  \item \textbf{\cell{complex\_softmax}}: the lucidrains port of Eilers and Jiang (2023)
  with \texttt{complete\_complex=False}: row softmax over $\Re\langle q,k\rangle/\sqrt{2d_h}$
  of unnormalised complex $q,k$.
  \item \textbf{Real cells}: standard scaled dot-product softmax (Vaswani et al., 2017);
  element-wise sigmoid with bias $-\log N$ (Ramapuram et al., 2025); screening with
  TanhNorm, softmask off (Nakanishi, 2026). Their normalised versions
  (Section~\ref{app:realnorm}) use cosine scores of real vectors.
  \item \textbf{Isolation cells} (Sections~\ref{app:init}--\ref{app:variants}), named as in the released code:
  \cell{complex\_sigmoid\_nol2} (sigmoid on the unnormalised score);
  \cell{complex\_relu}; \cell{complex\_clamped\_relu}; \cell{complex\_square};
  \cell{complex\_cubic}; \cell{complex\_sigmoid\_cgate}; \cell{complex\_softmax\_cnorm};
  \cell{complex\_holographic\_matched}.
\end{itemize}

\subsection{Parameter budgets}
Small-scale suite: complex cells dim 128, 4 heads, head dimension 32; real cells dim 184,
4 heads, head dimension 46 ($184/128\approx\sqrt2$ equalises stored scalars and FLOPs; the
head dimension must be even for rotary embeddings). Mid-scale Copy and Needle: dim 256,
depth 6, 8 heads, head dimension 32 for both families, i.e.\ 4{,}737{,}808 real scalars for
\cell{real\_softmax} against 9{,}475{,}612 for \PCT{}. RadioML: dim 64, depth 3.

\subsection{Training protocol}
Unless overridden in a row of Table~\ref{tab:configs}, all runs use:
\begin{itemize}
  \item \textbf{Optimiser}: AdamW with $\beta=(0.9,0.999)$, $\varepsilon=10^{-8}$, weight
  decay $10^{-2}$
  \item \textbf{Gradient clipping}: max-norm $1.0$
  \item \textbf{Learning-rate schedule}: cosine decay with linear warm-up
  \item \textbf{Position encoding}: rotary (RoPE) on $q$ and $k$ of every layer
  \item \textbf{Normalisation}: pre-norm RMSNorm (real or complex variant matching the cell
  substrate)
  \item \textbf{Feed-forward}: $4\times$ expansion, ReLU$^2$ on the real side, modReLU on the
  complex side
  \item \textbf{Screening cells}: cosine softmask off throughout (with the softmask on,
  screening cells stay at chance on long-range Copy)
  \item \textbf{Mixed precision}: bfloat16 where supported (H100, A100); float32 fallback on
  consumer GPUs
\end{itemize}

\subsection{Per-experiment configurations}
\begin{table}[H]
\centering\scriptsize
\setlength{\tabcolsep}{3pt}
\begin{adjustbox}{max width=\textwidth}
\begin{tabular}{llllcccc}
\toprule
where & task & cells & scale & batch & LR & steps & seeds \\
\midrule
Tab.~\ref{tab:norm}, \ref{tab:family}, \ref{tab:init}, \ref{tab:variants}, \ref{tab:realnorm} & Copy $d{=}1000$, ListOps, FFT & complex isolation cells (incl.\ temperature and Holographic variants), normalised real cells & dim 128, depth 4 (heads 4, head dim 32, FFN $4\times$) & 32 & $3\cdot10^{-3}$ (ListOps $10^{-3}$) & 2{,}000 (warm-up 200) & 3 \\
Tab.~\ref{tab:norm}, \ref{tab:family}, \ref{tab:variants}, \ref{tab:realnorm} & Needle $L{=}2048$ & complex and normalised real & dim 256, depth 6, chunk 256 & 32 & $3\cdot10^{-4}$ & 15{,}000--30{,}000 & 3 \\
Tab.~\ref{tab:headline} & Copy $d{=}500/2000$ (mid) & 6 cells & dim 256, depth 6 & 32 & $3\cdot10^{-4}$ & 30{,}000 & 3 \\
Tab.~\ref{tab:headline} & Needle $L{=}2048$ (mid) & 6 cells & dim 256, depth 6, chunk 256 & 32 & $3\cdot10^{-4}$ & 30{,}000 & 1 (\PCT{}: 3) \\
Tab.~\ref{tab:headline} & ListOps $L{=}1024$ (depth 2, args 3) & 6 cells, parameter-fair & dim 128/184, depth 4 & 32 & $10^{-3}$ & 30{,}000 & 3 \\
Tab.~\ref{tab:headline} & LRA-Text 4K, LRA-Image & 6 cells, parameter-fair & dim 128/184, depth 4 & 32 & $4\cdot10^{-4}$ & 15{,}000 & 3 \\
Tab.~\ref{tab:headline} & FFT-MNIST $t{=}16$ & 6 cells, parameter-fair & dim 128/184, depth 4 & 32 & $10^{-3}$ & 15{,}000 & 3 \\
Tab.~\ref{tab:headline} & RadioML L1/L2 & 6 cells & dim 64, depth 3 & 32 & $10^{-3}$ & 10{,}000 & 3 \\
Tab.~\ref{tab:depthscale} (left) & ListOps $L{=}1024$ (depth 6, args 5) & \PCT{} (L2-softmax at depth 20) & dim 128, depth 2--20 & 32 & $10^{-3}$ & 30{,}000 (+70{,}000) & 3 (1) \\
Tab.~\ref{tab:depthscale} (right) & ListOps $L{=}1024$ (depth 6, args 5) & \PCT{} & dim/depth $=32$ & 32 & $10^{-3}$ & 60{,}000 & 3 \\
Tab.~\ref{tab:robust} & Copy $d{=}1000$ (LR sweep) & 6 cells & dim 32, depth 2 & 256 & $\{1,3,10\}\cdot10^{-3}$ & 1{,}500 & 3 \\
Tab.~\ref{tab:robust} & Copy $d{=}2000/500$ (batch sweep) & 6 cells & dim 256, depth 6 & 8/32/256 & best per cell & 5{,}000 & 2--3 \\
Tab.~\ref{tab:pathx} & Path-X & screening + PCR & dim 128, 6 PCR + 2 screening blocks & 32 & $4.5\cdot10^{-4}$ & 250{,}000 & 3 \\
\midrule
App.~\ref{app:tests} & ListOps small ($L{=}128$) & 6 cells & dim 64, depth 3 & 32 & $10^{-3}$ & 10{,}000 (warm-up 500) & 3 \\
App.~\ref{app:tests} & Needle $L{=}2048$, $b{=}16$ & 6 cells, chunked & dim 256, depth 6 & 16 & $3\cdot10^{-4}$ & 60{,}000 & 3 \\
App.~\ref{app:tests} & phase memory $K{=}8$, delay 30 & 6 cells & dim 64, depth 3 & 32 & $10^{-3}$ & 5{,}000 (warm-up 500) & 3 \\
App.~\ref{app:tests} & multi-label spectral detection $K{=}16$ & 6 cells, parameter-fair & dim 128/184, depth 4 & 32 & $10^{-3}$ & 10{,}000 (warm-up 500) & 3 \\
App.~\ref{app:tests} & MusicNet (10-piece test) & 6 cells & dim 64, depth 3 & 32 & $10^{-3}$ & 10{,}000 (warm-up 500) & 3 \\
App.~\ref{app:tests} & Copy $d\in\{100,200,500,1000\}$, FFT-MNIST, spectral detection (two-axis extension) & \cell{complex\_cubic}, \cell{complex\_clamped\_relu} & as the corresponding row above & as above & as above & as above & 3 \\
\bottomrule
\end{tabular}
\end{adjustbox}
\caption{Configurations. Path-X: state size 256, screening heads 4 with head dimension 32,
chunk 1{,}024, batch normalisation, linear pixel encoder, warm-up 2{,}500, learning rate
held then decayed linearly to $0.1\times$ from step 200{,}000, weight decay 0.05.}
\label{tab:configs}
\end{table}

\subsection{Compute and reproducibility}
\begin{itemize}
  \item \textbf{Primary GPU}: NVIDIA H100 80\,GB (Sakura DOK).
  \item \textbf{Secondary GPUs}: A100 80\,GB (Soroban) and consumer-grade RTX 3090 / 4090
  (Vast.ai), used for parallel sweeps.
  \item \textbf{Wall-clock per representative run}: isolation runs 4--5 minutes each on an
  H100 (Copy $d{=}1000$ about 17 minutes); small-scale runs ($<1$M parameters, $\le10$K steps)
  5--40 minutes; mid-scale runs (1--6M parameters, 30K steps) 1--5 hours; Needle runs 3--4
  hours; depth 20 (about 4M parameters, 30K steps) about 4 hours; Path-X several hours to a
  day.
  \item \textbf{Seed convention}: multi-seed claims use seeds $s\in\{0,1,2\}$; the ten-seed
  Copy $d{=}2000$ record uses $s\in\{0,\ldots,9\}$.
  \item \textbf{Reproduction artefacts}: the public data archive (Appendix~\ref{app:data})
  contains, per run, the exact configuration, the per-step metric trajectory
  (\texttt{metrics.jsonl}) and a \texttt{summary.json} with the final metric, the
  deterministic final evaluation and the elapsed wall-clock.
\end{itemize}

\section{Changes from the previous version}
\label{app:changes}

This version replaces the four-condition framework (C1--C4) of the previous manuscript by
the two conditions of Sections~\ref{sec:normalisation} and \ref{sec:gate}. The following
claims of the previous version are withdrawn or corrected, each on the basis of a
pre-registered re-measurement: the attribution of \cell{complex\_softmax}'s results to
softmax's competition for attention mass (ruled out by the sigmoid column of
Table~\ref{tab:norm}); the empirical necessity of element-wise gating, boundedness and
smoothness of the gate (Tables~\ref{tab:family}, \ref{tab:init}, \ref{tab:variants}); the
$2\times2$ gate-isolation matrix on Copy, whose readings were produced by a causal-mask
artefact and a dead initialisation; the LRA-Text accuracies of $1.000$ (train-split,
four-sample readings; now $0.665$ held-out); the LRA-Image accuracy of $0.458$ (now $0.372$
held-out); the depth-sweep readings of $0.78$--$0.81$ (single-batch inflation; now
$0.62$--$0.65$ at 2{,}048 samples) and the reading of the depth sweep as a scaling
experiment; the description of \cell{complex\_softmax} as a softmax over the normalised
score magnitude; the parameter-fairness claim for the mid-scale rows; the ``phase
superposition'' naming of the spectral-detection task; and the description of the
Path-X model as carrying an end-to-end complex representation, now stated in the measurable
form of Section~\ref{sec:pathx}. The equivariance and depth-stability theorems are
unchanged in content and extended from the element-wise gate to the whole family.

\section{Data, code and models}
\label{app:data}

Per-run configurations, metric trajectories and summaries for every row, the Lean
development, and the analysis scripts are released at
\url{https://github.com/leohio/phase-coherent-transformer-r-d}. All raw results for every
test listed in Appendix~\ref{app:tests}, including per-seed metrics, configuration files,
training logs and aggregation scripts, are under
\url{https://github.com/leohio/phase-coherent-transformer-r-d/tree/main/result}, organised
by experiment family, with a top-level \texttt{summary.md} giving a one-paragraph
orientation per family. Path-X weights for all three
seeds are at \url{https://huggingface.co/complexedleo/pcr-screening-pathx}.

\section*{References}
\begingroup\raggedright\sloppy
\begin{itemize}
  \item \textbf{Trabelsi et al. 2018} -- C. Trabelsi, O. Bilaniuk, Y. Zhang, D. Serdyuk, S. Subramanian, J. F. Santos, S. Mehri, N. Rostamzadeh, Y. Bengio, C. J. Pal. \emph{Deep Complex Networks}. ICLR 2018. \url{https://arxiv.org/abs/1705.09792}
  \item \textbf{Yang et al. 2020} -- M. Yang, M. Q. Ma, D. Li, Y.-H. H. Tsai, R. Salakhutdinov. \emph{Complex Transformer: A Framework for Modeling Complex-Valued Sequence}. ICASSP 2020. \url{https://arxiv.org/abs/1910.10202}
  \item \textbf{Eilers \& Jiang 2023} -- F. Eilers, X. Jiang. \emph{Building Blocks for a Complex-Valued Transformer Architecture}. ICASSP 2023. \url{https://arxiv.org/abs/2306.09827}
  \item \textbf{Hao et al. 2025} -- Y. Hao et al. \emph{Holographic Transformers for Complex-Valued Signal Processing: Integrating Phase Interference into Self-Attention}. \url{https://arxiv.org/abs/2509.19331}
  \item \textbf{Fix et al. 2025} -- J. Fix, Q. Gabot, H. Nguyen, J. Frontera-Pons, C. Ren, J.-P. Ovarlez. \emph{torchcvnn: A PyTorch-based library to easily experiment with state-of-the-art Complex-Valued Neural Networks}. IJCNN 2025. \url{https://github.com/torchcvnn/torchcvnn}
  \item \textbf{lucidrains 2024} -- P. Wang. \emph{complex-valued-transformer}. \url{https://github.com/lucidrains/complex-valued-transformer}
  \item \textbf{Vaswani et al. 2017} -- A. Vaswani et al. \emph{Attention Is All You Need}. NeurIPS 2017. \url{https://arxiv.org/abs/1706.03762}
  \item \textbf{Ramapuram et al. 2025} -- J. Ramapuram et al. \emph{Theory, Analysis, and Best Practices for Sigmoid Self-Attention}. ICLR 2025. \url{https://arxiv.org/abs/2409.04431}
  \item \textbf{Wortsman et al. 2023} -- M. Wortsman, J. Lee, J. Gilmer, S. Kornblith. \emph{Replacing softmax with ReLU in Vision Transformers}. \url{https://arxiv.org/abs/2309.08586}
  \item \textbf{Saratchandran et al. 2024} -- H. Saratchandran, J. Zheng, Y. Ji, W. Zhang, S. Lucey. \emph{Rethinking Attention: Polynomial Alternatives to Softmax in Transformers}. \url{https://arxiv.org/abs/2410.18613}
  \item \textbf{Nakanishi 2026} -- K. M. Nakanishi. \emph{Screening Is Enough}. \url{https://arxiv.org/abs/2604.01178}
  \item \textbf{Henry et al. 2020} -- A. Henry, P. R. Dachapally, S. S. Pawar, Y. Chen. \emph{Query-Key Normalization for Transformers}. Findings of EMNLP 2020. \url{https://arxiv.org/abs/2010.04245}
  \item \textbf{Liu et al. 2022} -- Z. Liu et al. \emph{Swin Transformer V2: Scaling Up Capacity and Resolution}. CVPR 2022. \url{https://arxiv.org/abs/2111.09883}
  \item \textbf{Dehghani et al. 2023} -- M. Dehghani et al. \emph{Scaling Vision Transformers to 22 Billion Parameters}. ICML 2023. \url{https://arxiv.org/abs/2302.05442}
  \item \textbf{Loshchilov et al. 2024} -- I. Loshchilov, C.-P. Hsieh, S. Sun, B. Ginsburg. \emph{nGPT: Normalized Transformer with Representation Learning on the Hypersphere}. \url{https://arxiv.org/abs/2410.01131}
  \item \textbf{Tay et al. 2021} -- Y. Tay et al. \emph{Long Range Arena: A Benchmark for Efficient Transformers}. ICLR 2021. \url{https://arxiv.org/abs/2011.04006}
  \item \textbf{Gu, Goel \& R\'e 2022a} -- A. Gu, K. Goel, C. R\'e. \emph{Efficiently Modeling Long Sequences with Structured State Spaces}. ICLR 2022. \url{https://arxiv.org/abs/2111.00396}
  \item \textbf{Gu et al. 2022b} -- A. Gu, A. Gupta, K. Goel, C. R\'e. \emph{On the Parameterization and Initialization of Diagonal State Space Models}. NeurIPS 2022. \url{https://arxiv.org/abs/2206.11893}
  \item \textbf{Gupta, Gu \& Berant 2022} -- A. Gupta, A. Gu, J. Berant. \emph{Diagonal State Spaces are as Effective as Structured State Spaces}. NeurIPS 2022. \url{https://arxiv.org/abs/2203.14343}
  \item \textbf{Smith, Warrington \& Linderman 2023} -- J. T. H. Smith, A. Warrington, S. W. Linderman. \emph{Simplified State Space Layers for Sequence Modeling}. ICLR 2023. \url{https://arxiv.org/abs/2208.04933}
  \item \textbf{Orvieto et al. 2023} -- A. Orvieto et al. \emph{Resurrecting Recurrent Neural Networks for Long Sequences}. ICML 2023. \url{https://arxiv.org/abs/2303.06349}
  \item \textbf{Ma et al. 2023} -- X. Ma et al. \emph{Mega: Moving Average Equipped Gated Attention}. ICLR 2023. \url{https://arxiv.org/abs/2209.10655}
  \item \textbf{Zhai et al. 2023} -- S. Zhai, T. Likhomanenko, E. Littwin, D. Busbridge, J. Ramapuram, Y. Zhang, J. Gu, J. Susskind. \emph{Stabilizing Transformer Training by Preventing Attention Entropy Collapse}. ICML 2023. \url{https://arxiv.org/abs/2303.06296}
  \item \textbf{Levin, Peres \& Wilmer 2017} -- D. A. Levin, Y. Peres, E. L. Wilmer. \emph{Markov Chains and Mixing Times}, 2nd ed. AMS, 2017.
\end{itemize}
\endgroup

\end{document}